\titlespacing{\subsection}{0pt}{*0}{*0}
\titlespacing{\section}{0pt}{*0}{*0}
\newtheorem{theorem}{\bf Theorem}[section]
\newtheorem{remark}[theorem]{\bf Remark}
\newtheorem{lemma}[theorem]{\bf Lemma}
\newcommand{\beano}{\begin{eqnarray*}}
	\newcommand{\eeano}{\end{eqnarray*}}
\def\BibTeX{{\rm B\kern-.05em{\sc i\kern-.025em b}\kern-.08em
    T\kern-.1667em\lower.7ex\hbox{E}\kern-.125emX}}
\begin{document}
\title{A Multi-Step Minimax Q-Learning Algorithm for Two-Player Zero-Sum Markov Games
}

\author{Shreyas S R\IEEEauthorrefmark {1}, \, Antony Vijesh\IEEEauthorrefmark {2}
}

\maketitle

\begin{abstract} An interesting iterative procedure is proposed to solve two-player zero-sum Markov games. Under suitable assumption, the boundedness of the proposed iterates is obtained theoretically. Using results from stochastic approximation, the almost sure convergence of the proposed multi-step minimax Q-learning is obtained theoretically. More specifically, the proposed algorithm converges to the game theoretic optimal value with probability one, when the model information is not known. Numerical simulation authenticates that the proposed algorithm is effective and easy to implement.
\end{abstract}

\begin{IEEEkeywords}
		Two-player zero-sum Markov games, Multi-agent reinforcement learning, minimax Q-learning.
\end{IEEEkeywords}
\footnotetext[1]{Department of Mathematics,
	Indian Institute of Technology (IIT) Indore, India, 452020. \texttt{Email:shreyassr123@gmail.com, phd1901241006@iiti.ac.in}, \textcolor{black}{Shreyas S R  is grateful to CSIR, India (File No. 09/1022(0088)2019-EMR-I) for the financial support}}
\footnotetext[2]{
	Department of Mathematics, IIT Indore, Simrol, Indore-452020, Madhya Pradesh, India. \texttt{Email:vijesh@iiti.ac.in}}

	\noindent
	\section{ Introduction}
Single-agent reinforcement learning (RL) deals with the task of finding an optimal policy or strategy in a stochastic environment. Markov Decision Process (MDPs) provides the underlying framework for solving the problem of finding an optimal strategy in these decision-making problems with one agent \cite{MR3889951}. Single-agent RL has been successfully implemented in robot navigation, agent-based production scheduling, economics, and traffic signal control. However, many practical environments involve more than one agent competing against each other to optimize their strategies. This type of problem can be studied systematically using the concept of Markov games from game theory. Markov games can be considered as a generalization of MDPs \cite{MR1418636}. Similar to the problem of finding an optimal strategy in MDPs, the problem of finding an optimal strategy in Markov games is an important problem. Using successive iterative scheme, when the model information is completely available, Shapley \cite{shapley1953stochastic} obtained the optimal strategy for two-player zero-sum Markov games (TZMG). By modifying the Q-learning algorithm for MDP, M. L. Littman \cite{littman1994markov} extends the Q-learning algorithm to solve TZMG when the information of the model is not completely available. This algorithm is also called as minimax Q-learning. Under suitable assumptions, M. L. Littman and C. Szepesvari \cite{littman1996generalized} show that this algorithm converges to the game theoretic optimal value with probability one. Consequently, Markov games become an elegant framework for reinforcement learning with more than one agent, though efforts were made without Markov games \cite{tan1993multi,yanco1993adaptive}.  \\
\hspace*{0.5cm}The Q-learning algorithm for TZMG \cite{littman1994markov} is further designed to handle both coordination games \cite{claus1998dynamics}, and general sum games \cite{hu1998multiagent}. The Q-learning algorithm for general sum stochastic game  \cite{hu1998multiagent} was refined by M. Bowling \cite{bowling2000convergence} and further generalized by J. Hu and M. P. Wellman \cite{hu2003nash}. This algorithm in \cite{hu2003nash} is called as Nash Q-learning. By introducing the attribution friend and foe, M. L. Littman \cite{littman2001friend} relaxed the restrictive assumption in the Q-learning algorithm for general sum stochastic games \cite{hu1998multiagent}. By introducing WoLF principle, M. Bowling and M. Veloso \cite{bowling2001rational,bowling2002multiagent} tried to develop multi-agent reinforcement learning satisfying rational property, using the stochastic game framework. Minimax Q-learning algorithm, together with value function approximation, was studied by M. G. Lagoudakis and R. Parr \cite{lagoudakis2002value} for TZMG. Further, A. Greenwald and K. Hall \cite{greenwald2003correlated} developed an algorithm called correlated Q-learning for solving general sum Markov games. This algorithm can be considered as a generalization of Littman's \cite{littman2001friend} friend-or-foe Q-learning and Hu and Wellman's \cite{hu2003nash} Nash Q-learning. In this direction, various other reinforcement algorithms are also suitably modified to study TZMG. B. Banerjee et al. \cite{banerjee2001fast} extended the SARSA algorithm as minimax-SARSA algorithm to find Nash equilibrium in TZMG. Recently, the SOR Q-learning \cite{kamanchi2019successive} for MDP is suitably updated to handle TZMG in \cite{diddigi2022generalized}. Similarly, the switching system technique for MDP \cite{lee} was modified to study minimax Q-learning for TZMG \cite{lee2023finite}. Various developments in the aspects of multi-agent reinforcement learning algorithms have been comprehensively reported in \cite{busoniu2008comprehensive}\cite{zhang2021multi}. \\
Though various single-step RL algorithms with single-agent suitably updated for multi-agent RL algorithms in the literature, very few results are available for multi-step multi-agent RL algorithms. In 2001, B. Banerjee et al. \cite{banerjee2001fast} observed that Peng's and William's  Q($\lambda$) \cite{peng1994incremental} version of minimax algorithm performs better than the single-step SARSA minimax \cite{banerjee2001fast} and Littman's minimax \cite{littman1994markov} algorithms. Unfortunately, they were not able to obtain the theoretical convergence of this multi-step multi-agent RL algorithm. It is worth mentioning that the theoretical convergence of Peng's and William's  Q($\lambda$) algorithm for single-agent was also not available during this period. Recently, in 2021, Kozuno et al. \cite{kozuno2021revisiting}, proved the convergence of Peng's and William's  Q($\lambda$) algorithm for a single-agent. Studying convergence of multi-step algorithms for MDP are available in \cite{kozuno2021revisiting,harutyunyan2016q,munos2016safe}. The practicality of employing multi-step algorithms has long been a subject of inquiry. This question finds a comprehensive response in \cite{cichosz1994truncating} and \cite{van2016true}.  While numerous multi-step algorithms have been developed within the framework of single-agent RL, there is still much work to be done in exploring their application and advantages within the context of MARL. However, the recent success of algorithms like AlphaGo, which uses a multi-step Monte Carlo Tree Search (MCTS) algorithm, is an important step in that direction \cite{silver2016mastering}. In a similar vein, the proposed algorithm also explores the notion of multi-step RL algorithms in the context of TZMG. \\
In \cite{tsql}, two interesting two-step algorithms were proposed to handle MDP and obtained their theoretical convergence results. Numerical results in \cite{tsql} showed that the two-step method has good flexibility and performs better than the single-step methods in some benchmark problems. In this manuscript, we customize this idea from \cite{tsql} to handle Markov games. More specifically, a multi-step minimax Q-learning (MMQL) algorithm is proposed to obtain the Nash equilibrium in the TZMG. The contributions of this manuscript are as follows:
\begin{itemize}
	\item The boundedness of the proposed MMQL algorithm is provided.  
	\item Using the techniques available in the stochastic approximation theory, the convergence of the MMQL algorithm is presented.
	\item The proposed algorithm's performance is empirically compared with recent algorithms, demonstrating the superiority of the proposed algorithm both in terms of error and time.	\end{itemize}	
	
The structure of the manuscript is as follows: Section 2 covers preliminaries, notations, and essential results. Section 3 discusses the proposed algorithm in detail. Section 4 presents the main findings and demonstrates the convergence of the proposed algorithm. Section 5 includes numerical experiments. Finally, Section 6 draws conclusions.
\section{Preliminaries and Background}
\noindent
In this manuscript, we consider two-player zero-sum Markov games. Formally, described by a six-tuple $(S,A,B,p,r,\alpha)$, where $S$ is a set of states, $A$ and $B$ are the set of actions that can be performed by agent 1, and agent 2 respectively. The system evolves according to the transition probability $p$. More specifically, $p(j|i,a,b)$ represents the probability of transitioning to state $j$ when agent $1$, and agent $2$ select actions $a$, and $b$, respectively, while in state $i$. The real number $r(i,a,b)$ denotes the reward or payoff obtained by player 1 when the actions $a$, and $b$ are taken by agent 1, and agent 2 respectively, while in state $i$. In zero-sum Markov games, it is important to note that the payoff for agent 2 is negative of the payoff acquired by agent 1. Further, $\alpha \in [0,1)$ denotes the discount factor. It is important to mention that this manuscript deals with finite state space $S$, and the finite action spaces $A$, and $B$.  Also, the norm used is the max-norm, i.e., for any $z \in \mathbb{R}^d$, $\|z\| := \max_{1\leq i \leq d}|z(i)|$. In addition, let $R_{\max}$ denote $\max_{i,a,b} |r(i,a,b)|$. 
\\
The objective of the agents in TZMG is to learn an optimal strategies $\mu^*_1: S \rightarrow \Delta^{|A|}$, and $\mu^*_2: S \rightarrow \Delta^{|B|}$, where $\Delta^{|X|}$ denote the probability simplex in $\mathbb{R}^{|X|}$. The existence of such policies for the discounted finite state action Markov game  is guaranteed in the classical game theory \cite{shapley1953stochastic}. The problem of finding an optimal strategy $\mu^*_1$, and $\mu^*_2$ reduces to solving the following system of non-linear equations analogous to Bellman's optimality equations from dynamic programming given by
\begin{align}\label{eq1}
Q^*(i,a,b)=&r(i,a,b)+\alpha\sum_{j=1}^{|S|}p(j|i,a,b)\, val[Q^*(j)],\\ &\quad \quad \hspace*{2cm} \forall (i,a,b)\in S\times A\times B \nonumber,
\end{align}
where the $val$ operator is defined for any $|A| \times |B|$ matrix $M$ as follows:
\begin{equation}
val[M] := \min_{y \in \Delta^{|B|}} \max_{x \in \Delta^{|A|}} x^tMy, \quad \text{for any}\, M.
\end{equation}
Similar to the Q-Bellman operator in MDPs, the minimax Q-Bellman operator follows\\
$H:\mathbb{R}^{|S\times A \times B|}\rightarrow \mathbb{R}^{|S\times A \times B|}$ is defined as
\begin{equation}
(HQ)(i,a,b)=r(i,a,b)+\alpha \sum_{j=1}^{|S|}p(j|i,a,b)val[Q(j)].
\end{equation}
The subsequent analysis relies on the properties of the $val$ operator, which are presented in the following lemma. 
\begin{lemma}[Proposition G.4, in \cite{MR1418636}]\label{l1}
	Let $M = [m_{ij}]$, and $N = [n_{ij}]$ be two $|A| \times |B|$ matrices. Then the following holds true :
	
	(i) $| val[M] - val[N] |  \,\leq \max_{i,j} |m_{ij} - n_{ij} | = \| M - N\|.$\\
	
	(ii) $| val[M] | \, \leq \|M\|.$
\end{lemma}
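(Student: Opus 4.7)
The plan is to exploit two facts simultaneously: the bilinearity of the map $(x,y)\mapsto x^t M y$, and the constraint that $x\in\Delta^{|A|}$ and $y\in\Delta^{|B|}$ have non-negative entries summing to one, so that $\sum_{i,j} x_i y_j = 1$. Together these convert an entrywise $\ell_\infty$ bound on $M-N$ into a uniform bound on $x^t(M-N)y$, after which (i) follows by pushing through the inner $\max$ and outer $\min$, and (ii) follows from (i) by specializing $N$ to the zero matrix.

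Concretely, for (i) I would start by writing, for any $x\in\Delta^{|A|}$ and $y\in\Delta^{|B|}$,
\begin{equation*}
    |x^t M y - x^t N y| \;=\; \Bigl|\sum_{i,j} x_i(m_{ij}-n_{ij})y_j\Bigr| \;\le\; \|M-N\|\sum_{i,j} x_i y_j \;=\; \|M-N\|,
\end{equation*}
using non-negativity of $x_i, y_j$ and the fact that each $|m_{ij}-n_{ij}|\le\|M-N\|$. This yields the pointwise estimate $x^t M y \le x^t N y + \|M-N\|$ for every admissible $(x,y)$. Since $\|M-N\|$ does not depend on $x$ or $y$, I can take $\max_{x\in\Delta^{|A|}}$ on both sides and then $\min_{y\in\Delta^{|B|}}$, each of which preserves the direction of the inequality, to obtain $val[M]\le val[N]+\|M-N\|$. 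Swapping the roles of $M$ and $N$ gives the matching lower bound, and the two together establish (i).

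For (ii), the cleanest route is to apply (i) with $N$ the zero matrix, noting that $x^t\mathbf{0}y = 0$ identically and therefore $val[\mathbf{0}] = 0$; this gives $|val[M]| = |val[M] - val[\mathbf{0}]| \le \|M\|$. Alternatively one can argue directly: $|x^tMy|\le\sum_{i,j}x_i|m_{ij}|y_j\le\|M\|$ for every $(x,y)$, so $-\|M\|\le x^t M y\le\|M\|$, and this sandwich passes through the $\min$-$\max$ to yield $|val[M]|\le\|M\|$.

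There is no substantive obstacle in either part; the only detail that requires care is verifying that $\max_x$ followed by $\min_y$ preserves the additive inequality, which is immediate precisely because the constant $\|M-N\|$ is independent of $(x,y)$. The lemma is a routine consequence of bilinearity combined with the probability-simplex constraint, and it is invoked here purely as a technical tool so that later contraction/boundedness arguments for the operator $H$ can be reduced to entrywise $\ell_\infty$ estimates.
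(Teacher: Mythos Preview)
Your argument is correct and is in fact the standard one for this result. The paper does not supply its own proof of this lemma; it is stated as a quotation of Proposition~G.4 from \cite{MR1418636} and used without further justification, so there is nothing to compare against beyond noting that your derivation is exactly the kind of elementary bilinearity-plus-simplex computation that underlies the cited proposition.
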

\noindent
Using Lemma \ref{l1}, one can conclude that $H$ is a contraction mapping with respect to max-norm, and $\alpha$ is the contraction factor. Consequently, $H$ has a unique fixed point, say $Q^*$. From the optimal $Q^*$, an optimal policies $\mu_1^*$, and $\mu_2^*$ obtained as follows:
$ (\mu_1^*(i),\mu_2^*(i))\in \arg val[Q^*(i)], \quad \forall i \in S.$\\
M. L. Littman and C. Szepesvari \cite{littman1996generalized} proved that the following iterative scheme:
\begin{align*}
	&Q_{n+1}(i,a,b) = (1-\beta_n(i,a,b)) Q_n(i,a,b)\\&\quad \quad \quad +\beta_n(i,a,b)   (r(i,a,b)+\alpha \, val[ Q_n(j)]),
\end{align*}
converges to the optimal Q-value if all combinations of states and actions are chosen infinitely often and the learning rate satisfies Robbins and Monro condition. In other words, minimax Q-learning converges to the fixed point of minimax Q-Bellman operator. \\
The following lemma from stochastic approximation due to S. Singh et al. \cite{singh2000convergence} is used to prove the almost convergence of the proposed multi-step minimax Q-learning algorithm.
\begin{lemma}(Lemma 1, \cite{singh2000convergence})\label{fl}
	Let $X$ be a finite set. Let $ \left(\Psi_{n},F_n,\beta_n\right) $ be a stochastic process such that $\Psi_{n},F_n,\beta_n:X\rightarrow \mathbb{R}$ and satisfies the recurrence relation :
	\begin{equation*}
	\Psi_{n+1}(y)=(1-\beta_n(y))\Psi_{n}(y)+\beta_n(y)F_n(y), \; where \hspace{0.1cm} y \in X.\end{equation*}
	Let $\mathcal{F}_n$, $n=0,1,2,...$ be an increasing sequence of sigma-fields that includes history of the process such that $\beta_0$, and $\Psi_0$ are $\mathcal{F}_0$ measurable, and for $n\geq 1$, $\beta_n$, $\Psi_n$, and $F_{n-1}$ are $\mathcal{F}_n$ measurable. Then $\Psi_{n}$ $\rightarrow$ 0 with probability one $(w.p.1)$ as $n$ $\rightarrow$ $\infty$, if the following condition hold:\\
	1. $0\leq\beta_n(y)\leq 1$, $\sum_{n=1}^{\infty}\beta_n(y)=\infty$, $\sum_{n=1}^{\infty}\beta^2_n(y)<\infty$ w.p.1.\\
	2. $\|E[F_n|\mathcal{F}_n]\| \leq \kappa \|\Psi_n\|+\zeta_n$, where $\kappa \in[0,1)$, and $\zeta_n$ $\rightarrow$ 0 $w.p.1$, and $\|.\|$ is any weighted max-norm.\\
	3. $Var[ F_n(y)|\mathcal{F}_n]\leq K(1+ \|\Psi_n\|)^2$, where $K$ is some constant. 
	
\end{lemma}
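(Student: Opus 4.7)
The plan is to decompose $F_n$ into its conditional mean and a martingale difference noise, control each contribution separately, and close the argument using the contraction constant $\kappa<1$. First I would write $F_n(y) = E[F_n(y)\mid\mathcal{F}_n] + \eta_n(y)$, where $\eta_n(y) := F_n(y) - E[F_n(y)\mid\mathcal{F}_n]$ is a martingale difference with respect to $\mathcal{F}_n$. By condition 3, $E[\eta_n(y)^2 \mid \mathcal{F}_n] \leq K(1+\|\Psi_n\|)^2$, and by condition 2, $\|E[F_n\mid\mathcal{F}_n]\| \leq \kappa\|\Psi_n\| + \zeta_n$ with $\zeta_n \to 0$ a.s.

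Second, I would introduce an auxiliary noise process, component-wise: for each $y \in X$ set $W_0(y) = 0$ and
\begin{equation*}
W_{n+1}(y) = (1-\beta_n(y)) W_n(y) + \beta_n(y)\eta_n(y).
\end{equation*}
On any event where $\|\Psi_n\|$ stays bounded, the increments $\beta_n(y)\eta_n(y)$ form a square-integrable martingale difference sequence with $\sum_n E[\beta_n(y)^2\eta_n(y)^2\mid\mathcal{F}_n] < \infty$ by condition 1 and the variance bound, so $W_n(y) \to 0$ almost surely by the martingale convergence theorem applied to the partial sums, combined with the fact that $\prod_{k\leq n}(1-\beta_k(y))\to 0$ since $\sum_n \beta_n(y) = \infty$.

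Third, define $U_n(y) := \Psi_n(y) - W_n(y)$. Then $U_{n+1}(y) = (1-\beta_n(y))U_n(y) + \beta_n(y) E[F_n(y)\mid\mathcal{F}_n]$, which is a deterministic-in-expectation recursion. Taking norms and iterating, using $\|\Psi_n\| \leq \|U_n\| + \|W_n\|$ together with condition 2, I would obtain
\begin{equation*}
\limsup_{n\to\infty}\|U_n\| \;\leq\; \kappa\limsup_{n\to\infty}\|\Psi_n\|.
\end{equation*}
Since $W_n \to 0$ and $\zeta_n \to 0$ a.s., this yields $\limsup_n\|\Psi_n\| \leq \kappa\limsup_n\|\Psi_n\|$, and because $\kappa<1$ this forces $\|\Psi_n\|\to 0$ almost surely, as required.

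The main obstacle is verifying, prior to invoking the martingale argument, that $\|\Psi_n\|$ remains almost surely bounded, since the variance bound in condition 3 is itself a function of $\|\Psi_n\|$, creating a circular dependence. I would resolve this by a stopping-time (projection) argument: on the stopped event $\{\sup_{k\leq n}\|\Psi_k\|\leq C\}$ the noise is dominated by a controlled martingale, and the contraction pulls the iterate back into a ball of radius $\kappa C + o(1) < C$ for $C$ chosen large enough; letting $C \uparrow \infty$ shows the unstopped process is bounded on a set of probability one. A secondary subtlety is that $\beta_n(y)$ is component-dependent and the updates are asynchronous, so the step-size and Borel--Cantelli estimates must be applied coordinate by coordinate, using that $X$ is finite and each $y$ is updated infinitely often.
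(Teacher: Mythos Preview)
The paper does not prove this lemma at all; it is stated as a preliminary tool and attributed to Singh et al.\ (Lemma~1 in \cite{singh2000convergence}), so there is no ``paper's own proof'' to compare against. Your outline is in fact the standard route used in that reference (and in the earlier Jaakkola--Jordan--Singh and Bertsekas--Tsitsiklis arguments on which it is based): split $F_n$ into its conditional mean and a martingale-difference noise, track a separate noise process $W_n$ driven by the same step sizes, show $W_n\to 0$ using $\sum_n\beta_n^2<\infty$ and the conditional-variance bound, and then use the contraction in condition~2 on the residual $U_n=\Psi_n-W_n$.

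Two places in your sketch would need tightening to become a proof rather than a plan. First, the claim that $W_n(y)\to 0$ does not follow directly from ``the martingale convergence theorem applied to the partial sums'' together with $\prod_k(1-\beta_k)\to 0$; one typically expands $W_{n+1}$ as a weighted sum of the $\eta_k$'s and controls the tail via a Doob/Kolmogorov-type inequality or a Dvoretzky-style lemma, and the asynchronous, coordinate-dependent step sizes must be threaded through that estimate. Second, the circularity you identify in the boundedness step is real, and the stopping-time sketch you give is the right idea but is where most of the technical work in the original proofs actually lives; the inequality $\kappa C + o(1) < C$ is not by itself enough without a careful supermartingale or comparison argument on the stopped process. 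None of this is a wrong approach---it is exactly the classical one---but the paper simply invokes the lemma rather than reproving it.
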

\noindent
Boundedness plays an important role in the convergence of many iterative schemes in RL. Usually by assuming the boundedness of the Q-iterates \cite{diddigi2022generalized} or by proving the boundedness of the Q-iterates \cite{singh2000convergence}\cite{borkar2000ode} the theoretical convergence analysis will be performed. In this manuscript, the boundedness of the Q-iterates proved theoretically under suitable assumption on rewards using the following lemma.
\begin{lemma}(Proposition 3.1, \cite{MR1976398})\label{k1}
	If $\sum_{n=1}^{\infty}|a_n| < \infty$, then the product $\prod_{n=1}^{\infty}(1+a_n)$ converges.
\end{lemma}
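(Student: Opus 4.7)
\medskip
\noindent\textbf{Proof proposal.} The plan is to show that the partial products $P_n := \prod_{k=1}^{n}(1+a_k)$ form a Cauchy sequence in $\mathbb{R}$, hence converge. The two main ingredients I would use are (i) a uniform bound on $|P_n|$ coming from the inequality $1+x \leq e^{x}$ applied to $|a_k|$, and (ii) a telescoping estimate that bounds the ``tail product minus $1$'' by the corresponding product of $(1+|a_k|)$'s minus $1$.

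First I would prove the elementary inequality
\[
\Bigl|\prod_{k=1}^{m}(1+b_k) - 1\Bigr| \;\leq\; \prod_{k=1}^{m}(1+|b_k|) - 1
\]
by induction on $m$: the base case is the triangle inequality, and the inductive step follows from the identity
\[
\prod_{k=1}^{m}(1+b_k) - 1 \;=\; (1+b_m)\Bigl[\prod_{k=1}^{m-1}(1+b_k) - 1\Bigr] + b_m,
\]
together with the bound $|1+b_m|\le 1+|b_m|$. Combined with $\prod(1+|b_k|)\le \exp(\sum |b_k|)$, this gives a quantitative version of ``small $\ell^1$ tails imply small multiplicative deviation from $1$.''

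Next I would apply this twice. For the uniform bound, $|P_n|\le \prod_{k=1}^{n}(1+|a_k|)\le \exp\!\bigl(\sum_{k=1}^{\infty}|a_k|\bigr)=:C<\infty$. For the Cauchy estimate, write
\[
P_{n+m} - P_n \;=\; P_n\!\left[\prod_{k=n+1}^{n+m}(1+a_k)-1\right],
\]
and use the inequality above with $b_k = a_{n+k}$ to bound the bracketed factor by $\exp\!\bigl(\sum_{k=n+1}^{\infty}|a_k|\bigr)-1$. Since $\sum|a_n|<\infty$, this tail tends to $0$ as $n\to\infty$, uniformly in $m$, so $|P_{n+m}-P_n|\le C\bigl(\exp(\sum_{k>n}|a_k|)-1\bigr)\to 0$, proving $(P_n)$ is Cauchy.

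I do not anticipate a genuine obstacle here; the only subtle point is resisting the temptation to pass through logarithms, which would require first disposing of indices where $1+a_n\le 0$. The product-form Cauchy argument above is insensitive to the sign of $a_n$ and delivers the stated convergence (possibly to $0$) directly.
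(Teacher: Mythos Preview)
Your argument is correct: the inductive inequality $\bigl|\prod_{k=1}^{m}(1+b_k)-1\bigr|\le \prod_{k=1}^{m}(1+|b_k|)-1$ is valid, the uniform bound $|P_n|\le C$ follows, and the Cauchy estimate goes through exactly as you wrote. Note, however, that the paper does not supply its own proof of this lemma at all; it is simply quoted as Proposition~3.1 of the cited reference and used as a black box to guarantee that the bound $M=\frac{R_{\max}}{1-\alpha}(1+\alpha|\theta_0|)\prod_{i=1}^{\infty}(1+\beta_i|\theta_i|\alpha^2)$ in Lemma~4.1 is a finite number. So there is no ``paper's approach'' to compare against---your self-contained Cauchy argument is a standard one and would serve perfectly well in place of the citation.
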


With the above results at hand, we present the proposed algorithm in the next section and its convergence behaviour in Section 4.
\section{Proposed Algorithm}
\noindent

The proposed algorithm is presented in this section. As discussed in the previous section, for standard minimax Q-learning, at every step of the iteration, we receive a reward $r'$ and the next state $s'$ at state $s$. In the proposed algorithm, however, at every step of the iteration, at each step of the iteration, in addition to $r'$ and $s'$, depending on the value of multi-step parameter $k$, we generate the following sample at the $n^{th}$ time step:
\begin{equation*}
\begin{split}
\{s_n,a_n,b_n,r^{1}_n,s^{1}_n,a^{1}_n,b^{1}_n,r^{2}_n,s^{2}_n,a^{2}_n,b^{2}_n,r^{3}_n,s^{3}_n,a^{3}_n, b^{3}_n,\\
r^{4}_n,\cdots, s^{k-1}_n,a^{k-1}_n,b^{k-1}_n,r_{n+1},s_{n+1},a_{n+1},b_{n+1}\}.
\end{split}
\end{equation*}
Using this information, the update rule for the multi-step algorithm is as follows:
\begin{align} \label{main}
&Q_{n+1}(s_n,a_n,b_n)\notag\\ &= (1-\beta_n(s_n,a_n,b_n))Q_n(s_n,a_n,b_n)+ \beta_n(s_n,a_n,b_n) \Big( \notag \\
& \quad r^{1}_n + \alpha\, val[Q_n(s^{1}_n)] + \alpha\theta^{1}_n \left( r^{2}_n + \alpha\, val[Q_n(s^{2}_n)] \right)+\notag \\
& \quad \quad  \alpha^2 \theta^{2}_n \left( r^{3}_n + \alpha\, val[Q_n(s^{3}_n)] \right)+\cdots \notag \\
&\quad \quad   \cdots + \alpha^{k-1} \theta^{k-1}_n \left( r_{n+1} + \alpha\, val[Q_n(s_{n+1})] \right) \Big).
\end{align}

\begin{algorithm*}[!t]
	\caption{Multi-step minimax Q-learning algorithm (MMQL)}
	\begin{algorithmic}[1]
		\REQUIRE Initial Q-Vector: $Q_0$; Discount factor: $\alpha$; Step size rule $\beta_n$; Multi-step parameter $k$, Sequence $\theta^{i}_n$, $i=1,2,...,k-1$; Number of iterations: $T$; A set $\{s_n,a_n,b_n,r^{1}_n,s^{1}_n,a^{1}_n,b^{1}_n,\dots,s^{k-1}_n,a^{k-1}_n,b^{k-1}_n,r_{n+1}, s_{n+1},a_{n+1}\}_{n=0}^{\infty}$, where every state-action pair appears infinitely often.
		\FOR {$n = 0, 1, \ldots, T-1$}
		\STATE For actions $a_n$, $b_n$ at $s_n$, observe the set
		\STATE Observe $s^{1}_n$ and $r^{1}_n$
		\STATE For $a^{1}_n$ at $s^{1}_n$, observe $s^{2}_n$ and $r^{2}_n$
		\STATE Continue the process until $k-1$ steps, i.e., at $s^{k-1}_n$, $a^{k-1}_n$, observe $r_{n+1}$, $s_{n+1}$
		\STATE \textbf{Update rule:}
		\begin{align*}
		&Q_{n+1}(s_n,a_n,b_n) \\&= (1-\beta_n(s_n,a_n,b_n))Q_n(s_n,a_n,b_n) + \beta_n(s_n,a_n,b_n) \Big( r^{1}_n + \alpha\, val[Q_n(s^{1}_n)]  + \alpha\theta^{1}_n \left( r^{2}_n + \alpha\, val[Q_n(s^{2}_n)] \right) + \\& \quad \quad \quad \quad \alpha^2 \theta^{2}_n \left( r^{3}_n + \alpha\, val[Q_n(s^{3}_n)] \right) + \cdots + \alpha^{k-1} \theta^{k-1}_n \left( r_{n+1} + \alpha\, val[Q_n(s_{n+1})] \right) \Big).
		\end{align*}
		\STATE Set $s_n = s_{n+1}$
		\ENDFOR
		\RETURN $Q_{T-1}$
	\end{algorithmic}
\end{algorithm*}
	The following remark provides conditions on \( \theta^{i}_n \), which will be clarified in the next section. Specifically, they will be necessary for proving the boundedness of the proposed algorithm.
\begin{remark}\label{rem1}
	For \( i = 1, 2, \dots, k-1 \), let \( \theta^{i}_n \) be a sequence of real numbers satisfying the following conditions:
	\begin{enumerate}
		\item \( |\theta^{i}_n| \leq 1, \quad \forall n \in \mathbb{N}. \)
		\item \( |\theta^{i}_n| \) is monotonically decreasing to zero.
		\item \( \sum_{n=1}^{\infty} \beta_n |\theta^{i}_n| < \infty. \)
	\end{enumerate}
\end{remark}

\section{Main Results and Convergence Analysis}
\noindent
The following section discusses the almost sure convergence of the proposed algorithm using Lemma \ref{fl}, discussed in Section 2. Before presenting the main convergence theorem, the boundedness of the multi-step minimax Q-learning will be discussed in the following result.
\begin{lemma} Let $Q_n(s,a,b)$ be the value corresponding to a state $s$ and action pair $(a,b)$ at $n^{th}$ iteration of MMQL with multi-step parameter $k$. If $\|Q_0\| \leq \frac{R_{\max}}{1-\alpha}$, then $\|Q_n\| \leq M,\, \forall n \in \mathbb{N}, \quad  \text{where}\; M= \frac{R_{\max}}{1-\alpha}(1+\alpha |\theta^{1}_0|+\alpha^2 |\theta^{2}_0|+...+\alpha^{k-1} |\theta^{k-1}_0| ) \prod_{i=1}^{\infty} (1+\beta_i|\theta^{1}_i| \alpha^2+\beta_i|\theta^{2}_i| \alpha^3+...++\beta_i|\theta^{k-1}_i| \alpha^{k})$.
\end{lemma}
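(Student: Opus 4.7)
The plan is to bound $\|Q_n\|$ by induction on $n$, using Lemma \ref{l1}(ii) at each step to replace $|val[Q_n(\cdot)]|$ with $\|Q_n\|$, and to convert the per-step multiplicative inflation into a finite limiting constant via Lemma \ref{k1}. The heart of the argument is to identify an increasing sequence $M_n$ with $\|Q_n\|\leq M_n$ that inflates by exactly the factor $(1+\beta_n|\theta_n|\alpha^2)$ at each step; the exponent $\alpha^2$ in the final product suggests that the contractive slack supplied by the discount factor must be paid against the reward term $R_{\max}$ at every iteration.

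First, I would apply Lemma \ref{l1}(ii) to the update \eqref{first}. Since only the coordinate $(s_n,a_n,b_n)$ is touched while the others retain their previous values, taking the max-norm yields
$$\|Q_{n+1}\|\leq \max\bigl\{\|Q_n\|,\,(1-\beta_n)\|Q_n\|+\beta_n(R_{\max}+\alpha\|Q_n\|)(1+\alpha|\theta_n|)\bigr\}.$$
For the base step, the hypothesis $\|Q_0\|\leq C:=R_{\max}/(1-\alpha)$ together with the identity $R_{\max}+\alpha C=C$ gives $\|Q_1\|\leq C(1+\beta_0\alpha|\theta_0|)\leq C(1+\alpha|\theta_0|)=:M_1$. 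For the inductive step, assume $\|Q_n\|\leq M_n:=C(1+\alpha|\theta_0|)\prod_{i=1}^{n-1}(1+\beta_i|\theta_i|\alpha^2)$. Expanding the bracketed quantity as
$$M_n-\beta_n(1-\alpha)M_n+\beta_n R_{\max}(1+\alpha|\theta_n|)+\beta_n|\theta_n|\alpha^2 M_n,$$
I would invoke the monotonicity $|\theta_n|\leq|\theta_0|$ to obtain $M_n\geq C(1+\alpha|\theta_0|)\geq C(1+\alpha|\theta_n|)$, equivalently $R_{\max}(1+\alpha|\theta_n|)\leq(1-\alpha)M_n$. This absorbs the middle two terms into a nonpositive quantity, leaving $\|Q_{n+1}\|\leq M_n(1+\beta_n|\theta_n|\alpha^2)=M_{n+1}$ and closing the induction.

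Finally, I would apply Lemma \ref{k1} with $a_n=\beta_n|\theta_n|\alpha^2$; the hypothesis $\sum_n\beta_n|\theta_n|<\infty$ together with $\alpha^2\leq 1$ gives $\sum|a_n|<\infty$, so the infinite product $\prod_{i=1}^{\infty}(1+\beta_i|\theta_i|\alpha^2)$ converges to a finite limit and hence $M_n\uparrow M<\infty$ with $\|Q_n\|\leq M$ for all $n$. The principal obstacle I anticipate is the algebraic bookkeeping that extracts $\alpha^2$ rather than the cruder $\alpha$ in the product: the first iterate must be handled separately so that the $(1+\alpha|\theta_0|)$ factor is absorbed once and for all, and only then can the monotonicity of $|\theta_n|$ be invoked repeatedly so that the reward contribution $\beta_n R_{\max}(1+\alpha|\theta_n|)$ is exactly offset by the contractive slack $\beta_n(1-\alpha)M_n$; without that offset one obtains only the weaker factor $(1+\beta_n|\theta_n|\alpha)$ and the sharper constant $M$ in the statement is lost.
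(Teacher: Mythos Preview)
Your proposal is correct and follows essentially the same route as the paper's proof: induction on $n$ with the intermediate bound $M_n=\dfrac{R_{\max}}{1-\alpha}(1+\alpha|\theta_0|)\prod_{i=1}^{n-1}(1+\beta_i|\theta_i|\alpha^2)$, the key cancellation $-\beta_n(1-\alpha)M_n+\beta_n R_{\max}(1+\alpha|\theta_n|)\leq 0$ via the monotonicity $|\theta_n|\leq|\theta_0|$, and Lemma~\ref{k1} for the finiteness of the limiting product. If anything, your write-up is slightly more explicit than the paper in handling the untouched coordinates (the outer $\max$) and in invoking Lemma~\ref{k1} at the end.
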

\begin{proof}
	We obtain this result using the induction principle. Let $(s,a,b) \in S \times A \times B$,
	\begin{align*} 
	&|Q_{1}(s,a,b)|\\ &= \bigg|(1-\beta_0)Q_0(s,a,b)+ \beta_0 \Big( r^{1} + \alpha\, val[Q_0(s^{1})]  \\
	&\quad + \alpha\theta^{1} \left( r^{2} + \alpha\, val[Q_0(s^{2})] \right)+ \alpha^2 \theta^{2} \left( r^{3} + \alpha\, val[Q_0(s^{3})] \right) \notag \\
	&\quad \quad  + \cdots + \alpha^{k-1} \theta^{k-1} \left( r_{1} + \alpha\, val[Q_0(s_{1})] \right) \Big)\bigg|\\
	&\leq (1-\beta_0)\|Q_{0}\| + \beta_0 \left(R_{\max} + \alpha \|Q_{0}\| \right)+ \beta_0 \alpha |\theta^{1}_0| (R_{\max}+\\
	&\quad \alpha \|Q_{0}\|)+ \cdots+\beta_0 \alpha^{k-1} |\theta^{k-1}_0| (R_{\max}+\alpha \|Q_{0}\|)\\
	& \leq (1-\beta_0)\frac{R_{\max}}{1-\alpha}+\beta_0\frac{R_{\max}}{1-\alpha}+\beta_0 \alpha |\theta^{1}_0| \frac{R_{\max}}{1-\alpha}+\cdots\\
	& \cdots+\beta_0 \alpha^{k-1} |\theta^{k-1}_0| \frac{R_{\max}}{1-\alpha}\\
	& \leq \frac{R_{\max}}{1-\alpha} (1+\alpha |\theta^{1}_0|+\alpha^2 |\theta^{2}_0|+\cdots+\alpha^{k-1} |\theta^{k-1}_0|).
	\end{align*}
	Hence, we have $\max_{(s,a,b)} |Q_1(s,a,b)| = \|Q_1\| \leq\frac{R_{\max}}{1-\alpha}(1+\alpha |\theta^{1}_0|+\alpha^2 |\theta^{2}_0|+\cdots+\alpha^{k-1} |\theta^{k-1}_0|)\leq M$. For $l=2,...,n$,  assume that $\|Q_l\| \leq \frac{R_{\max}}{1-\alpha}(1+\alpha |\theta^{1}_0|+\alpha^2 |\theta^{2}_0|+\cdots+\alpha^{k-1} |\theta^{k-1}_0|) \prod_{i=1}^{l-1} (1+\beta_i|\theta^{1}_i| \alpha^2+\beta_i|\theta^{2}_i| \alpha^3+\cdots+\beta_i|\theta^{k-1}_i| \alpha^{k})\leq M$ is true . The proof will be completed if $\|Q_{n+1}\| \leq \frac{R_{\max}}{1-\alpha}(1+\alpha |\theta^{1}_0|+\alpha^2 |\theta^{2}_0|+\cdots+\alpha^{k-1} |\theta^{k-1}_0|) \prod_{i=1}^{n} (1+\beta_i|\theta^{1}_i| \alpha^2+\beta_i|\theta^{2}_i| \alpha^3+\cdots+\beta_i|\theta^{k-1}_i| \alpha^{k})$. Define $L =  \frac{R_{\max}}{1-\alpha}(1+\alpha |\theta^{1}_0|+\alpha^2 |\theta^{2}_0|+\cdots+\alpha^{k-1} |\theta^{k-1}_0|) \prod_{i=1}^{l-1} (1+\beta_i|\theta^{1}_i| \alpha^2+\beta_i|\theta^{2}_i| \alpha^3+\cdots+\beta_i|\theta^{k-1}_i| \alpha^{k})$. Now,
	\begin{align*}
		&|Q_{n+1}(s,a,b)|\\&=|(1-\beta_n)Q_n(s,a,b)+\beta_n(r^{1}+\alpha val[Q_n(s^{1})]\\& \quad +\alpha \theta^1_n (r^2+\alpha val[Q_n(s^2)] )+\cdots\\&\quad \quad \cdots+\alpha^{k-1} \theta^{k-1}_n (r_1+\alpha val[Q_n(s_1)] ) ) |\\
		& \leq (1-\beta_n)\|Q_n\|+\beta_n\bigg(R_{\max}+\alpha \|Q_n\|+\alpha |\theta^{1}_n|(R_{\max}+\alpha \|Q_n\|)+\\
		& +\alpha^2 |\theta^{2}_n|(R_{\max}+\alpha \|Q_n\|)+...+\alpha^{k-1} |\theta^{k-1}_n|(R_{\max}+\alpha \|Q_n\|)\bigg)\\
		&\leq L (1+\alpha^2 \beta_n |\theta^{1}_n|+...+\alpha^{k}\beta_n |\theta^{k-1}_n|)\\
		&\quad -\beta_nL+\alpha\beta_nL+\beta_nR_{\max} (1+\alpha |\theta^{1}_n|+...+\alpha^{k-1} |\theta^{k-1}_n|).
	\end{align*}
	Now we substantiate that, $Z_n = -\beta_nL+\alpha\beta_nL+\beta_nR_{\max} (1+\alpha |\theta^{1}_n|+\cdots+\alpha^{k-1} |\theta^{k-1}_n|) \leq 0.$ For
	\begin{align*}
	&Z_n= \beta_n \left( (\alpha-1)L+R_{\max} (1+\alpha |\theta^{1}_n|+\cdots+\alpha^{k-1} |\theta^{k-1}_n|)\right)\\
	&= \beta_n \bigg( (\alpha-1)\frac{R_{\max}}{1-\alpha}(1+\alpha |\theta^{1}_0|+\alpha^2 |\theta^{2}_0|+\cdots\\
	&\cdots+\alpha^{k-1} |\theta^{k-1}_0|) \prod_{i=1}^{l-1} (1+\beta_i|\theta^{1}_i| \alpha^2+\beta_i|\theta^{2}_i| \alpha^3+\cdots\\
	&\cdots+\beta_i|\theta^{k-1}_i| \alpha^{k})+R_{\max} (1+\alpha |\theta^{1}_n|+\cdots+\alpha^{k-1} |\theta^{k-1}_n|)\bigg)\\
	&\leq \beta_n \bigg( -R_{\max}(1+\alpha |\theta^{1}_0|+\alpha^2 |\theta^{2}_0|+\cdots+\alpha^{k-1} |\theta^{k-1}_0|)+\\
	& \quad \quad R_{\max} (1+\alpha |\theta^{1}_n|+\cdots+\alpha^{k-1} |\theta^{k-1}_n|)\bigg)\leq 0.
	\end{align*}
	Therefore, $\left|Q_{n+1}(s,a,b)\right| \leq  L (1+\alpha^2 \beta_n |\theta^{1}_n|+\cdots+\alpha^{k}\beta_n |\theta^{k-1}_n|)$. Thus, $\max_{(s,a,b)} |Q_{n+1}(s,a,b)| = \|Q_{n+1}\| \leq\frac{R_{\max}}{1-\alpha}(1+\alpha |\theta^{1}_0|+\alpha^2 |\theta^{2}_0|+\cdots+\alpha^{k-1} |\theta^{k-1}_0|) \prod_{i=1}^{n} (1+\beta_i|\theta^{1}_i| \alpha^2+\beta_i|\theta^{2}_i| \alpha^3+\cdots+\beta_i|\theta^{k-1}_i| \alpha^{k})\leq M$. Hence, the result follows.
\end{proof}
Throughout this manuscript, we assume that the following property holds true.\\
\textbf{ Assumption:} The Markov chain generated by behaviour policy is ergodic. Furthermore, the behaviour policy has a non-zero probability of selecting all possible actions in any given state.	
\begin{theorem}
	Given a finite Markov game as defined in Section 2, the MMQL algorithm given by the update rule \eqref{main}, 
	converges $w.p.1$ to the optimal $Q^*$, where $\|Q_0\|\leq \frac{R_{\max}}{1-\alpha}$, and $0\leq \beta_n(i,a,b) \leq 1$ with $\sum_{n}\beta_n(i,a,b)=\infty,\; \sum_{n}\beta^2_n(i,a,b)<\infty,$ for all $(i,a,b)\in S\times A\times B$ with $\beta_n(i,a,b)=0, \forall (i,a,b) \neq (s_n,a_n,b_n)$.
\end{theorem}

\begin{proof}
	The correspondence to Lemma \ref{fl} follows from associating $X$ with $S\times A \times B$, and define $\Psi_n(s,a,b)=Q_n(s,a,b)-Q^*(s,a,b)$. Let

	\begin{equation*}
	\begin{split}
	s_n,a_n,b_n,r^{1}_n,s^{1}_n,a^{1}_n,b^{1}_n,r^{2}_n,s^{2}_n,a^{2}_n,b^{2}_n,r^{3}_n,s^{3}_n,a^{3}_n, b^{3}_n,\\
	r^{4}_n,\cdots, s^{k-1}_n,a^{k-1}_n,b^{k-1}_n,r_{n+1},s_{n+1},a_{n+1},b_{n+1}, \cdots
	\end{split}
	\end{equation*}
	be the trajectory produced by the behaviour strategies.	Using the above notation, the update rule can be written as
	\begin{align*} 
	&Q_{n+1}(s_n,a_n,b_n)\notag\\ &= (1-\beta_n(s_n,a_n,b_n))Q_n(s_n,a_n,b_n) \notag \\
	&\quad + \beta_n(s_n,a_n,b_n) \Big( r^{1}_n + \alpha\, val[Q_n(s^{1}_n)]+ \notag \\
	&\quad  \alpha\theta^{1}_n \left( r^{2}_n + \alpha\, val[Q_n(s^{2}_n)] \right)+ \alpha^2 \theta^{2}_n \left( r^{3}_n + \alpha\, val[Q_n(s^{3}_n)] \right)+ \notag \\
	&\quad \cdots + \alpha^{k-1} \theta^{k-1}_n \left( r_{n+1} + \alpha\, val[Q_n(s_{n+1})] \right) \Big),
	\end{align*}
	where $r^{1}_n=r(s_n,a_n,b_{n})$, $r^{1}_n=r(s^{1}_n,a^{1}_n,b^{1}_{n})$, $\cdots$, $r_{n+1}=r(s^{k-1}_{n},a^{k-1}_{n},b^{k-1}_{n})$.
	Thus,
	\begin{align*}
		&\Psi_{n+1}(s_n,a_n,b_n)\\&=(1-\beta_n(s_n,a_n,b_n))\Psi_{n}(s_n,a_n,b_n)+\beta_n(s_n,a_n,b_n)\\&\bigg(r^{1}_n+\alpha\, val[Q_n(s^{1}_{n})]+ \alpha \,\theta^{1}_n (r^{2}_n+ \alpha\, val[Q_n(s^{2}_{n})])+\cdots\\&\cdots+ \alpha^{k-1} \,\theta^{k-1}_n (r_{n+1}+\alpha \,val[Q_n(s_{n+1})] )-Q^*(s_n,a_n,b_n)\bigg).
	\end{align*}
	Let
	\begin{align*}
	&F_n(s_n,a_n,b_n)\\& = r^{1}_n+\alpha\, val[Q_n(s^{1}_{n})]+ \alpha \,\theta^{1}_n (r^{2}_n+ \alpha\, val[Q_n(s^{2}_{n})])+\cdots\\&\cdots+ \alpha^{k-1} \,\theta^{k-1}_n (r_{n+1}+\alpha \,val[Q_n(s_{n+1})] )-Q^*(s_n,a_n,b_n).
	\end{align*}
	As the multi-step minimax Q-learning requires extra sample during the update, for our analysis, we define the sigma fields as follows. For $n=0$, let $\mathcal{F}_0= \sigma(\{Q_0,s_0,a_0,b_0,\beta_0\})$ and for $n\geq 1$, $\mathcal{F}_n=\sigma(\{Q_0,s_0,a_0,b_0,\beta_0, r^{1}_{j-1},s^{1}_{j-1},a^{1}_{j-1},b^{1}_{j-1},\cdots, r^{k-1}_{j-1},s^{k-1}_{j-1},\\a^{k-1}_{j-1},b^{k-1}_{j-1}, r_j,\beta_j,s_j,a_j,b_j : 1 \leq j \leq n; \, k\geq 1 \})$.  
	With this choice of $\mathcal{F}_n$, $\beta_0$, and $\Psi_0$ are $\mathcal{F}_0$ measurable, and $\beta_n$, $\Psi_n$, and $F_{n-1}$ are $\mathcal{F}_n$ measurable.  Now we have,
		\begin{align*}
	&\big|E[F_n(s_n,a_n,b_n)|\mathcal{F}_n]\big|\\
		&=\big|E\big[r(s_n,a_n,b_n)+\alpha\, val[Q_n(s^{1}_n)]-Q^*(s_n,a_n,b_n)|\mathcal{F}_n\big]\\
		&\quad +\alpha \theta^{1}_n E\big[r^{2}_{n}+\alpha \,val[Q_n(s^{2}_{n})]|\mathcal{F}_n\big]\big|+\cdots\\
	&\quad\quad \quad \cdots+\alpha^{k-1} \theta^{k-1}_n E\big[r_{n+1}+\alpha \,val[Q_n(s_{n+1})]|\mathcal{F}_n\big]\big|\\
	&\leq\big|E\big[r(s_n,a_n,b_n)+\alpha\, val[Q_n(s'_n)]-Q^*(s_n,a_n,b_n)|\mathcal{F}_n\big]\big|\\&\quad +\alpha |\theta^{1}_n| \big|E\big[\big(r^{2}_n+\alpha \,val[Q_n(s_{n+1})] \big)|\mathcal{F}_n\big]\big|+\cdots\\
	&\quad \quad \cdots+ \alpha^{k-1} |\theta^{k-1}_n| \big|E\big[\big(r_{n+1}+\alpha \,val[Q_n(s_{n+1})] \big)|\mathcal{F}_n\big]\big|\\
	&=\bigg|\sum_{j=1}^{|S|}p(j|s_n,a_n,b_n)(r^{1}_n+\alpha val[Q_n(j)]-Q^*(s_n,a_n,b_n))\bigg|\\&\quad +\alpha |\theta^{1}_n| \big|E\big[\big(r^{2}_n+\alpha \,val[Q_n(s_{n+1})] \big)|\mathcal{F}_n\big]\big|+\cdots\\
	&\quad \quad \cdots+\alpha^{k-1} |\theta^{k-1}_n| \big|E\big[\big(r_{n+1}+\alpha \,val[Q_n(s_{n+1})] \big)|\mathcal{F}_n\big]\big|\\
	&\leq \left| HQ_n(s_n,a_n,b_n)-HQ^*(s_n,a_n,b_n)\right|\\&\quad+ (\alpha\, |\theta^{1}_n| + \cdots+  \alpha^{k-1}\, |\theta^{k-1}_n| )\left(R_{\max} + \alpha \|Q_n\|\right)\\
	& \leq \alpha \, \|\Psi_{n}\| + \zeta_n,
	\end{align*}

	\noindent
	where $\zeta_n=(\alpha\, |\theta^{1}_n| + \cdots+  \alpha^{k-1}\, |\theta^{k-1}_n| )\left(R_{\max} + \alpha \|Q_n\|\right)$. As the Q-iterates are bounded, and $|\theta^{i}_n|$ $\rightarrow$ 0, for $i=1,2,...,k-1$, one can conclude that, $\zeta_n$ $\rightarrow$ 0 as $n \rightarrow \infty$.
	Therefore, condition $(2)$ of Lemma \ref{fl} holds. \\
	Now consider,
	\begin{align*}	
	&Var[F_n(s_n,a_n,b_n)|\mathcal{F}_n]\\&=E\left[\left(F_n(s_n,a_n,b_n)-E[F_n(s_n,a_n,b_n)|\mathcal{F}_n]\right)^2 |\mathcal{F}_n\right]\\
	&\leq E\bigg[\bigg(r^{1}_n+\alpha val[Q_n(s^{1}_{n})]+\alpha \theta^{1}_n (r^{2}_n+\alpha val[Q_n(s^{2}_{n})])+...\\&\hspace*{1cm}\cdots+\alpha^{k-1} \theta^{k-1}_n (r_{n+1}+\alpha val[Q_n(s_{n+1})] )\bigg)^2|\mathcal{F}_n\bigg]\\
	& \leq \bigg(\big(1+\alpha |\theta^{1}_n|+\cdots+\alpha^{k-1} |\theta^{k-1}_n| )\big(R_{\max}+ \alpha \|Q_n\|\big)\bigg)^2\\
	& = Y^2(R_{\max}+ \alpha \|Q_n\|)^2, \\
	& \hspace*{1.6cm}\quad \text{where \,} Y=1+\alpha |\theta^{1}_n|+\cdots+\alpha^{k-1} |\theta^{k-1}_n|\\
	&\leq 2 Y^2 (R^2_{\max}+ \alpha^2 \|Q_n\|^2)\\
	&\leq 2 Y^2 \bigg(R^2_{\max}+ 2\alpha^2  (\|Q^{*}\|^2+\|\Psi_{n}\|^2)\bigg)\\
	& \leq K(1 + \|\Psi_n\|^2 ) \leq K(1 + \|\Psi_n\| )^2,
	\end{align*}
	where $K = \max\{2Y^2R_{\max}^2+ 4Y^2\alpha^2\|Q^*\| , \; 4Y^2\alpha\}$. All the hypotheses of Lemma \ref{fl} is fulfilled for the choice $X=S\times A\times B$, $\kappa = \alpha$. Hence, $\Psi_n$ $\rightarrow$ 0 $w.p.1$. Consequently, $Q_n$ converges to $Q^*$ $w.p.1.$
	
\end{proof}

	\section{Experiments}
\noindent
This section presents a comparison of the proposed multi-step minimax Q-learning (MMQL) algorithm with the classical minimax Q-learning (MQL) \cite{littman1994markov}, generalized optimal minimax Q-learning (G-SOROQL) \cite{diddigi2022generalized}, and generalized minimax Q-learning (G-SORQL) \cite{diddigi2022generalized}. The problem set-up is similar to that of the experiment performed in \cite{diddigi2022generalized}. It is essential to emphasize and acknowledge that the code utilized in this study has been sourced from the GitHub repository of R. B. Diddigi \cite{WinNT}. Our implementation can be accessed through the following link \cite{WinNT2}.
\subsection{Comparison of the MQL and MMQL}
\noindent
In this subsection TZMGs with 10 states and 5 actions for each player are generated. For various multi-step parameter $k$,  the performance of the proposed MMQL is compared with the classical MQL \cite{littman1994markov} in terms of error and execution time. Each episode of the experiment consists of running an algorithm for $1000$ iterations. The error upon completion of the episode is calculated by taking the Euclidean norm difference between the optimal value and the estimated value upon completion of the episode. The average error is evaluated by taking the average over $50$ independent episodes as follows
\begin{equation} \label{error1}
\text{Average Error} =\frac{1}{50}\sum_{m=1}^{50}\|Y^*-val[Q_{m}(.)]\|_2,
\end{equation}
where $Y^*$ is the min-max value function of the game and $val[Q_{m}(.)]$ is the min-max Q-value  estimated upon completion of the $m^{th}$ episode using the particular algorithm. Similarly, the average time (in seconds) is calculated by taking the average over the time taken by the algorithm to complete an episode.
\begin{table*}[ht]
	\centering
	\resizebox{0.8\textwidth}{!}{
		\begin{tabular}{|c|cc|cc|cc|cc|cc|}
			\hline
			\multirow{2}{*}{\textbf{Algorithm}} & \multicolumn{2}{c|}{\textbf{$\alpha$ = 0.1}}                                                                                                            & \multicolumn{2}{c|}{\textbf{$\alpha$ = 0.3}}                                                                                                             & \multicolumn{2}{c|}{\textbf{$\alpha$ = 0.5}}                                                                                                            & \multicolumn{2}{c|}{\textbf{$\alpha$ = 0.6}}                                                                                                             & \multicolumn{2}{c|}{\textbf{$\alpha$ = 0.7}}                                                                                                                     \\ \cline{2-11} 
			& \multicolumn{1}{c|}{\textbf{\begin{tabular}[c]{@{}c@{}}Average\\ Error\end{tabular}}} & \textbf{\begin{tabular}[c]{@{}c@{}}Average\\ Time\end{tabular}} & \multicolumn{1}{c|}{\textbf{\begin{tabular}[c]{@{}c@{}}Average\\ Error\end{tabular}}} & \textbf{\begin{tabular}[c]{@{}c@{}}Average \\ Time\end{tabular}} & \multicolumn{1}{c|}{\textbf{\begin{tabular}[c]{@{}c@{}}Average\\ Error\end{tabular}}} & \textbf{\begin{tabular}[c]{@{}c@{}}Average\\ Time\end{tabular}} & \multicolumn{1}{c|}{\textbf{\begin{tabular}[c]{@{}c@{}}Average\\ Error\end{tabular}}} & \textbf{\begin{tabular}[c]{@{}c@{}}Average \\ Time\end{tabular}} & \multicolumn{1}{c|}{\textbf{\begin{tabular}[c]{@{}c@{}}Average\\ Error\end{tabular}}} & \textbf{\begin{tabular}[c]{@{}c@{}}Average\\ Time\end{tabular}} \\ \hline
			\textbf{$k=1$}                        & \multicolumn{1}{c|}{0.044}                                                            & 1.118                                                           & \multicolumn{1}{c|}{0.0856}                                                           & 1.153                                                            & \multicolumn{1}{c|}{0.328}                                                            & 1.113                                                           & \multicolumn{1}{c|}{0.684}                                                            & 1.114                                                            & \multicolumn{1}{c|}{1.475}                                                            & 1.144                                                           \\ \hline
			\textbf{$k=2$}                        & \multicolumn{1}{c|}{0.083}                                                            & 1.690                                                           & \multicolumn{1}{c|}{0.132}                                                            & 1.730                                                            & \multicolumn{1}{c|}{0.184}                                                            & 1.695                                                           & \multicolumn{1}{c|}{0.181}                                                            & 1.712                                                            & \multicolumn{1}{c|}{0.300}                                                            & 1.709                                                           \\ \hline
			\textbf{$k=3$}                        & \multicolumn{1}{c|}{0.072}                                                            & 2.262                                                           & \multicolumn{1}{c|}{0.128}                                                            & 2.303                                                            & \multicolumn{1}{c|}{0.188}                                                            & 2.266                                                           & \multicolumn{1}{c|}{0.171}                                                            & 2.256                                                            & \multicolumn{1}{c|}{0.264}                                                            & 2.258                                                           \\ \hline
			\textbf{$k=4$}                        & \multicolumn{1}{c|}{0.065}                                                            & 2.771                                                           & \multicolumn{1}{c|}{0.122}                                                            & 2.877                                                            & \multicolumn{1}{c|}{0.175}                                                            & 2.845                                                           & \multicolumn{1}{c|}{0.167}                                                            & 2.845                                                            & \multicolumn{1}{c|}{0.268}                                                            & 2.823                                                           \\ \hline
		\end{tabular}%
	}
	\caption{The performance of MMQL for various choices of multi-step parameter $k$ and various discount factor $\alpha$.}
	\label{tab:my-table}
\end{table*}\\
In Table \ref{tab:my-table}, the performance of proposed algorithm MMQL with various choices of multi-step parameter $k$ is presented. The choices for $\theta^{1}_n$, $\theta^{2}_n$, and $\theta^{3}_n$ are $\frac{80}{n+80}$, $\frac{10}{n^2+10}$, $\frac{10}{n^2+10}$ respectively. Note that $k=1$ corresponds to the MQL algorithm, from Table \ref{tab:my-table}, one can see that MQL performs better for lower discount factors. Specifically, for $\alpha=0.1$, and $\alpha=0.3$. However, as $\alpha$ increases multi-step methods performs better than single-step method MQL in terms of accuracy. Figure \ref{nnn2}, provides the graphical representation of the Table \ref{tab:my-table}. From Table \ref{tab:my-table}, it is evident that the MMQL algorithm with $k=4$ performs better than $k=3$, $k=2$, and $k=1$ in terms of average error for higher discount factor. Note that though better accuracy is obtained as $k$ increases but the execution time is also increasing. Rest of the subsections the performance of the proposed multi-step method demonstrated  for the multi-step parameter $k=2$. 
\begin{figure}[ht]
	\includegraphics[width=9cm]{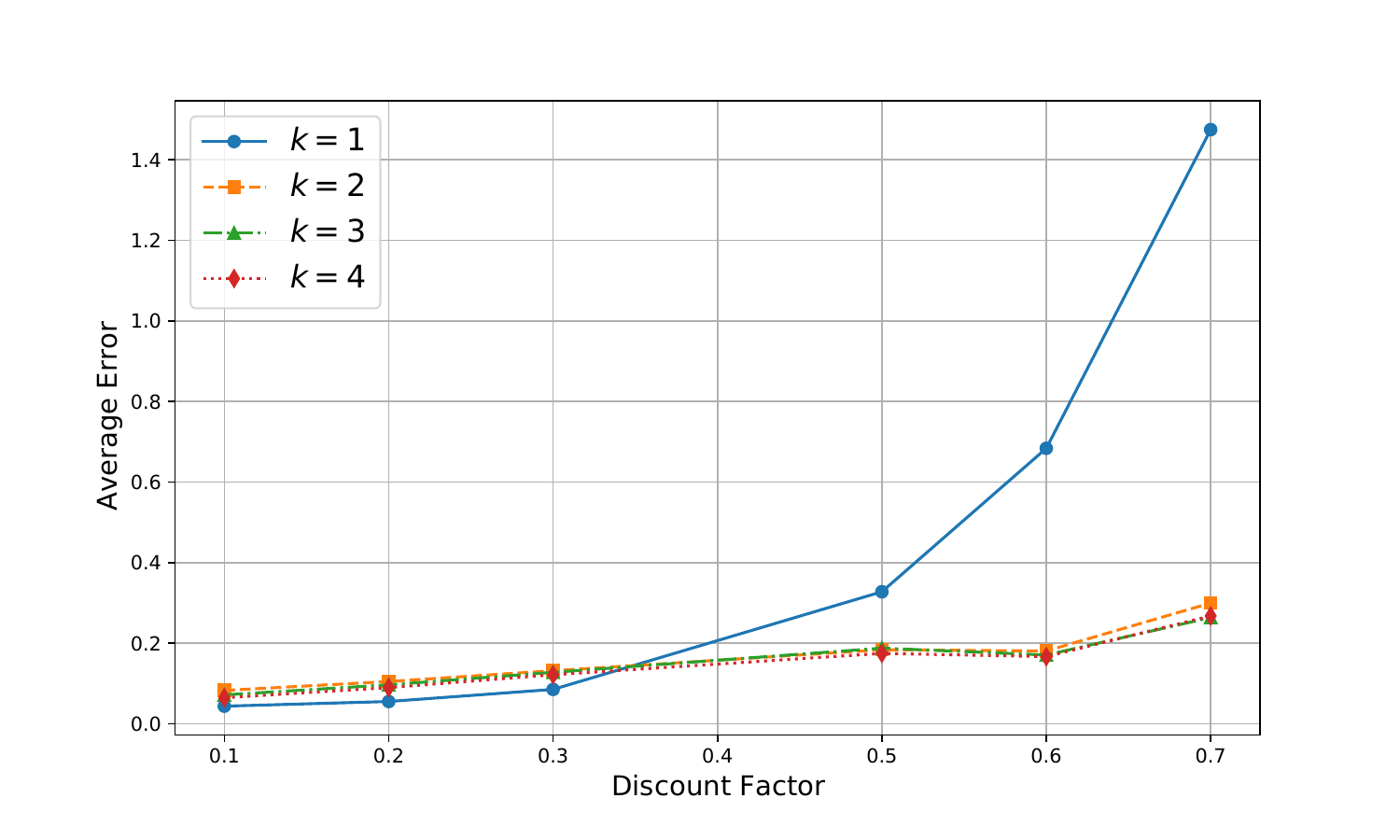}
	\centering
	\caption{ For $|S| =10$, $|A|=|B|=5$, discount factor vs average error.}
	\label{nnn2}
\end{figure}

\subsection{Experiment with hundred random MGs}
\noindent
In this subsection, we conduct experiments by generating $100$ MGs. For each fixed state-action pairs, $100$ Markov games generated and solved using the proposed algorithm with $k=2$ and the classical minimax Q-learning (MQL) \cite{littman1994markov}. For performance comparison the average error is calculated as follows:
\begin{equation}\label{eq2}
\text{Average Error} = \frac{1}{100}\sum_{m=1}^{100} \|J_m^* - val[Q_m(.)]\|_2,
\end{equation}
where $J_m^*$ is the min-max value function of  $m^{th}$ game, and $val[Q_m(.)]$ is the min-max Q-value corresponding to $m^{th}$ game obtained after running the particular algorithm for $1000$ iterations. Throughout this subsection the choice of $\beta_n = 10^2/(n+10^2)$, and $\theta^{1}_n = 10^3/(n+10^3)$. \\
In Table \ref{tab73}, we compare the proposed algorithm MMQL with multi-step parameter $k=2$ and MQL for various choices of discount factor. Table \ref{tab73} provides the number of iterations, and time required for MMQL with $k=2$ to achieve similar average error obtained by MQL after 1000 iterations. Specifically, for discount factor, 0.4, 0.5, 0.6, 0.7, and 0.8 on hundred randomly generated state-action pairs with dimension of $|S| =10$,  $|A|=4,$ and $ |B|=6$.  

\begin{table*}[ht]
	\centering
	\centering
	\resizebox{0.8\textwidth}{!}{
		\begin{tabular}{|c|cc|cc|cc|cc|cc|}
			\hline
			\multirow{2}{*}{\textbf{$\alpha$}}                                        & \multicolumn{2}{c|}{\textbf{0.4}}                   & \multicolumn{2}{c|}{\textbf{0.5}}                   & \multicolumn{2}{c|}{\textbf{0.6}}                   & \multicolumn{2}{c|}{\textbf{0.7}}                   & \multicolumn{2}{c|}{\textbf{0.8}}                   \\ \cline{2-11} 
			& \multicolumn{1}{c|}{\textbf{MMQL}}  & \textbf{MQL}  & \multicolumn{1}{c|}{\textbf{MMQL}}  & \textbf{MQL}  & \multicolumn{1}{c|}{\textbf{MMQL}}  & \textbf{MQL}  & \multicolumn{1}{c|}{\textbf{MMQL}}  & \textbf{MQL}  & \multicolumn{1}{c|}{\textbf{MMQL}}  & \textbf{MQL}  \\ \hline
			\textbf{Iteration}                                                & \multicolumn{1}{c|}{\textbf{215}}   & \textbf{1000} & \multicolumn{1}{c|}{\textbf{180}}   & \textbf{1000} & \multicolumn{1}{c|}{\textbf{160}}   & \textbf{1000} & \multicolumn{1}{c|}{\textbf{160}}   & \textbf{1000} & \multicolumn{1}{c|}{\textbf{160}}   & \textbf{1000} \\ \hline
			\textbf{\begin{tabular}[c]{@{}c@{}}Average \\ Error\end{tabular}} & \multicolumn{1}{c|}{0.502}          & 0.518         & \multicolumn{1}{c|}{0.804}          & 0.821         & \multicolumn{1}{c|}{1.338}          & 1.351         & \multicolumn{1}{c|}{2.310}          & 2.331         & \multicolumn{1}{c|}{4.430}          & 4.442         \\ \hline
			\textbf{\begin{tabular}[c]{@{}c@{}}Average \\ Time\end{tabular}}  & \multicolumn{1}{c|}{\textbf{0.248}} & 0.554         & \multicolumn{1}{c|}{\textbf{0.198}} & 0.554         & \multicolumn{1}{c|}{\textbf{0.176}} & 0.552         & \multicolumn{1}{c|}{\textbf{0.179}} & 0.556         & \multicolumn{1}{c|}{\textbf{0.183}} & 0.552         \\ \hline
		\end{tabular}
	}
	\caption{MMQL with $k=2$ vs MQL with $|S|=10$, $|A|=4$, and $|B|=6$.}
	\label{tab73}
\end{table*}
\noindent
From Table \ref{tab73}, it is evident that the proposed MMQL with $k=2$ achieves the target in fewer iterations and less computational time. Specifically, for discount factor, 0.4, 0.5, 0.6, 0.7, and 0.8, the MMQL algorithm achieves similar error as that of MQL in 215, 180, 160 iterations for the last three values, respectively. It is worth to mention that for the above choices of $\beta_n$, and $\theta^1_n$ the proposed multi-step method requires almost fifty percent less sample than single-step method MQL to produce similar accuracy.
\begin{table*}[ht]
	\centering
	\resizebox{0.8\textwidth}{!}{
		\begin{tabular}{|c|cc|cc|cc|cc|}
			\hline
			& \multicolumn{2}{c|}{\textbf{(10, 5, 5)}}                                                                                                                & \multicolumn{2}{c|}{\textbf{(10, 4, 6)}}                                                                                                                & \multicolumn{2}{c|}{\textbf{(10, 4, 7)}}                                                                                                                & \multicolumn{2}{c|}{\textbf{(10, 8, 4)}}                                                                                                                \\ \cline{2-9} 
			\multirow{-2}{*}{\textbf{Algorithm}} & \multicolumn{1}{c|}{\textbf{\begin{tabular}[c]{@{}c@{}}Average\\ Error\end{tabular}}} & \textbf{\begin{tabular}[c]{@{}c@{}}Average\\ Time\end{tabular}} & \multicolumn{1}{c|}{\textbf{\begin{tabular}[c]{@{}c@{}}Average\\ Error\end{tabular}}} & \textbf{\begin{tabular}[c]{@{}c@{}}Average\\ Time\end{tabular}} & \multicolumn{1}{c|}{\textbf{\begin{tabular}[c]{@{}c@{}}Average\\ Error\end{tabular}}} & \textbf{\begin{tabular}[c]{@{}c@{}}Average\\ Time\end{tabular}} & \multicolumn{1}{c|}{\textbf{\begin{tabular}[c]{@{}c@{}}Average\\ Error\end{tabular}}} & \textbf{\begin{tabular}[c]{@{}c@{}}Average\\ Time\end{tabular}} \\ \hline
			\textbf{MQL \cite{littman1994markov}}                         & \multicolumn{1}{c|}{1.667}                                    & 0.556                                   & \multicolumn{1}{c|}{1.351}                                    & 0.552                                   & \multicolumn{1}{c|}{1.333}                                    & 0.556                                   & \multicolumn{1}{c|}{2.211}                                    & 0.55                                    \\ \hline
			\textbf{MMQL}                        & \multicolumn{1}{c|}{\textbf{0.591}}                                    & 1.101                                   & \multicolumn{1}{c|}{\textbf{0.458}}                                    & 1.096                                   & \multicolumn{1}{c|}{\textbf{0.533}}                                     & 1.10                                    & \multicolumn{1}{c|}{\textbf{1.128}}                                    & 1.1                                     \\ \hline
		\end{tabular}%
	}
	\caption{MMQL with $k=2$ vs MQL with $|S|=10$ and various action pairs}
	\label{tab71}
\end{table*}\\
\noindent
Table \ref{tab71} provides a comparison of the MQL algorithm with the proposed MMQL for $k=2$ on a fixed state space of 10, with various action pairs for each player. In other words, Table \ref{tab71} is generated using the configurations $(10, 5, 5)$, $(10, 4, 6)$, $(10, 8, 4)$, and $(10, 4, 7)$. The average error is evaluated as in \eqref{eq2}. From Table \ref{tab71}, it is evident that the proposed algorithm achieves lower error compared to MQL in all the situations.

\subsection{Comparison with the recent algorithms:}
In this subsection, we compare the proposed algorithm with classical minimax Q-learning (MQL) \cite{littman1994markov}, together with the recently developed generalized minimax Q-learning algorithm (G-SORQL) \cite{diddigi2022generalized}, and generalized minimax optimal Q-learning algorithm (G-SOROQL)  \cite{diddigi2022generalized}. Note that G-SORQL and G-SOROQL are developed only for TZMG with the restriction : $p(i|i,a,b) > 0, \forall i,a,b$. Consequently, we impose this condition on the transition probability of the generated MGs. It is interesting to note that the G-SOROQL algorithm is not a model-free algorithm as mentioned in \cite{diddigi2022generalized}. In this subsection TZMGs with 10, 20, and 50 states and 5 actions for each player are generated. The discount factor $\alpha$ is set to 0.6.  The average error is obtained using \eqref{error1}. The choice of the step-size sequence is the same for all the experiments as in \cite{diddigi2022generalized}. The choice $\theta^{1}_n$ for this experiment is $\frac{80}{n+80}$. 
\begin{table}[ht]\centering
	\resizebox{8.5cm}{!}{
		\begin{tabular}{|c|cc|cc|cc|}
			\hline
			\textbf{}                                                                                 & \multicolumn{2}{c|}{\textbf{10 states}}                                                                                                                  & \multicolumn{2}{c|}{\textbf{20 states}}                                                                                                                   & \multicolumn{2}{c|}{\textbf{50 states}}                                                                                                                   \\ \hline
			\textbf{Algorithm}                                                                        & \multicolumn{1}{c|}{\textbf{\begin{tabular}[c]{@{}c@{}}Average\\ Error\end{tabular}}} & \textbf{\begin{tabular}[c]{@{}c@{}}Average \\ Time\end{tabular}} & \multicolumn{1}{c|}{\textbf{\begin{tabular}[c]{@{}c@{}}Average \\ Error\end{tabular}}} & \textbf{\begin{tabular}[c]{@{}c@{}}Average \\ Time\end{tabular}} & \multicolumn{1}{c|}{\textbf{\begin{tabular}[c]{@{}c@{}}Average \\ Error\end{tabular}}} & \textbf{\begin{tabular}[c]{@{}c@{}}Average \\ Time\end{tabular}} \\ \hline
			\textbf{\begin{tabular}[c]{@{}c@{}}MQL \cite{littman1994markov}\end{tabular}}            & \multicolumn{1}{c|}{0.6819}                                                           & 1.224                                                            & \multicolumn{1}{c|}{1.670}                                                             & 1.203                                                            & \multicolumn{1}{c|}{3.9927}                                                            & 1.244                                                            \\ \hline
			\textbf{\begin{tabular}[c]{@{}c@{}}G-SORQL \cite{diddigi2022generalized}\end{tabular}}         & \multicolumn{1}{c|}{0.5021}                                                           & 3.039                                                            & \multicolumn{1}{c|}{1.4550}                                                            & 4.340                                                            & \multicolumn{1}{c|}{3.7676}                                                            & 7.534                                                           \\ \hline
			\textbf{\begin{tabular}[c]{@{}c@{}}G-SOROQL \cite{diddigi2022generalized}\end{tabular}} & \multicolumn{1}{c|}{0.3364}                                                           & 1.263                                                            & \multicolumn{1}{c|}{1.2858}                                                            & 1.239                                                            & \multicolumn{1}{c|}{3.597}                                                             & 1.266                                                            \\ \hline
			\textbf{\begin{tabular}[c]{@{}c@{}}MMQL ($k=2$)\end{tabular}}            & \multicolumn{1}{c|}{\textbf{0.222}}                                                  & 1.841                                                            & \multicolumn{1}{c|}{\textbf{1.054}}                                                    & 1.789                                                            & \multicolumn{1}{c|}{\textbf{3.476}}                                                    & 1.897                                                            \\ \hline
		\end{tabular}%
	}
	\caption{Comparison of algorithms with varying states for $\alpha=0.6$, and $|A|=|B|=5$.}
	\label{tab2}
\end{table}\\
\noindent
Table \ref{tab2} provides the comparison of all the algorithms for different states. In all the cases, the proposed method shows superior performance in terms of average error. Note that the G-SORQL algorithm involves evaluating the successive relaxation term at every iteration of the algorithm \cite{diddigi2022generalized}. This increases the computational cost of the G-SORQL as the evaluation of successive relaxation term depends on the cardinality of the set $S$. This is evident in the time G-SORQL requires to finish an episode as the cardinality of state $S$ increases. On the other hand, the proposed MMQL with $k=2$ and existing MQL do not show much changes in the execution time as the state size increases. \\
 Figure \ref{nnn} depicts the convergence behaviour of all the algorithms. The figure clearly demonstrates that the average error is decreasing for all the algorithms. Moreover, it is evident that the proposed algorithm achieves faster convergence. 
\begin{figure}[ht]
	\includegraphics[width=9cm]{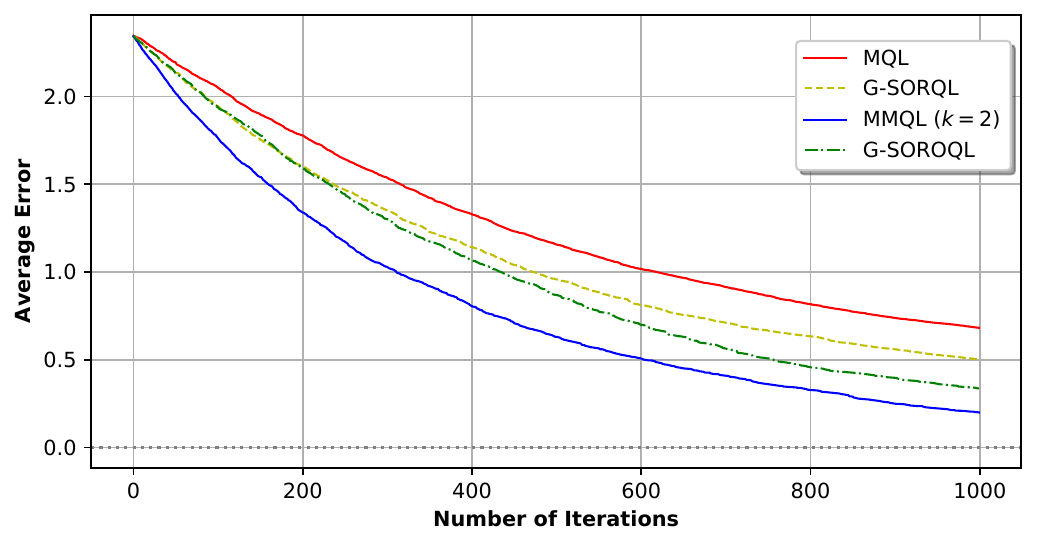}
	\centering
	\caption{ For $|S| =10$, $|A|=|B|=5$, iterations vs average error.}
	\label{nnn}
\end{figure}	
\section{Conclusion}
\noindent
This manuscript presents a novel multi-step RL algorithm for the two-player zero-sum Markov game. The boundedness of the multi-step minimax Q-learning and the convergence property of the algorithm is obtained theoretically. Finally, empirical tests are conducted to confirm the advantages of the proposed algorithm.

\bibliographystyle{IEEEtran}

\bibliography{ref}

\end{document}